\definecolor{Gray}{gray}{0.93}
\newtheorem{theorem}{Theorem}[section]
\newtheorem{lemma}[theorem]{Lemma}
\newtheorem{definition}{Definition}[section]
\newtheorem{assumption}{Assumption}[section]
\newcommand{\bbE}{\mathbb{E}}
\newcommand{\calD}{\mathcal{D}}
\def\1{\bm{1}}
\def\vx{{\bm{x}}}
\DeclareMathAlphabet{\mathsfit}{\encodingdefault}{\sfdefault}{m}{sl}
\SetMathAlphabet{\mathsfit}{bold}{\encodingdefault}{\sfdefault}{bx}{n}
\begin{document}
\pagenumbering{gobble}
\title{Towards Fair Federated Learning with Zero-Shot Data Augmentation}

\author{Weituo Hao$^1$, Mostafa El-Khamy$^2$, Jungwon Lee$^2$, Jianyi Zhang$^1$, \\Kevin J Liang$^1$, Changyou Chen$^3$, Lawrence Carin$^1$ \\
$^1$Duke University\quad $^2$Samsung\quad $^3$State University of New York at Buffalo\\
{\tt\small weituo.hao@duke.edu}
}

\maketitle

\begin{abstract}
Federated learning has emerged as an important distributed learning paradigm, where a server aggregates a global model from many client-trained models, while having no access to the client data. Although it is recognized that statistical heterogeneity of the client local data yields slower  global model convergence, it is less commonly recognized that it also yields a biased federated global model with a  high variance of accuracy across clients.
In this work, we aim to provide federated learning schemes with improved fairness. 
To tackle this challenge, we propose a novel federated learning system that employs zero-shot data augmentation on under-represented data to mitigate statistical heterogeneity, and encourage more uniform accuracy performance across clients in federated networks. We study two variants of this scheme, Fed-ZDAC (federated learning with zero-shot data augmentation at the clients) and Fed-ZDAS (federated learning with zero-shot data augmentation at the server). Empirical results on a suite of datasets demonstrate the effectiveness of our methods on simultaneously improving the test accuracy and fairness.

\end{abstract}


\section{Introduction}
Major advances in deep learning over the last decade have in large part been possible due to the increasing availability of data.
With the proliferation of personal computers, smart phones, and edge devices, data are being generated and collected at unprecedented rates, providing the large datasets needed to train the machine learning that power ``intelligent'' services that are becoming increasingly common in daily life. 
However, the rich content in these data that enables such smart behavior may also be revealing of personal information.
Traditional learning methods pool the data into a central repository for training, which makes personal data vulnerable to breaches or interception.

Federated learning~\cite{mcmahan2016communication} has emerged as an alternative strategy, with an emphasis on user data privacy.
In the federated learning paradigm, learning takes place on the client devices themselves, which means that the user's personal data never leaves the local device.
In place of the data, the updated model itself is sent to a coordinating server, which then aggregates the updates and distributes the new model to the clients. 

\begin{figure}[t]
	\centering
	\includegraphics[width=0.45\textwidth]{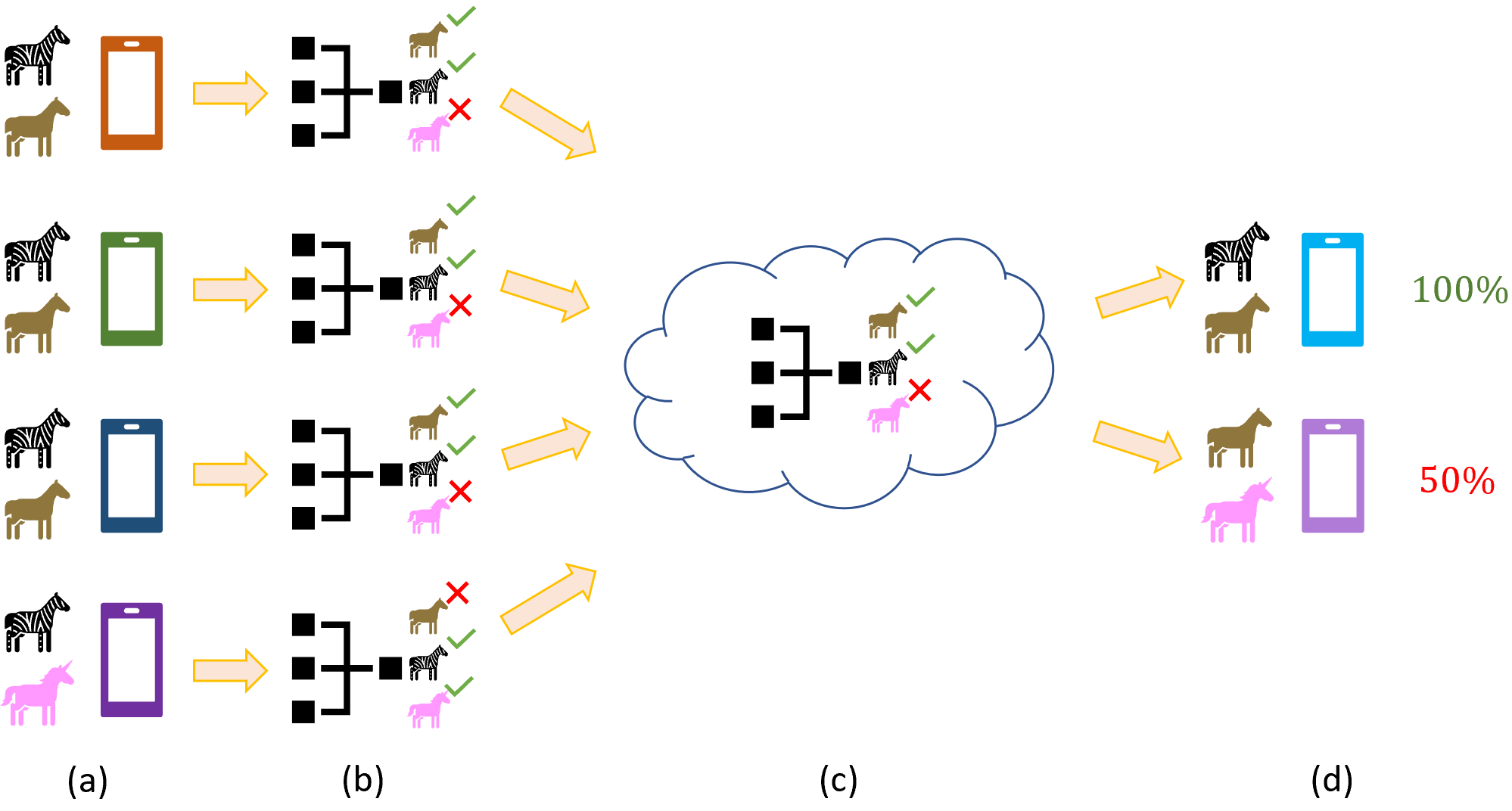}
	\caption{
    (a) Client data statistical heterogeneity.
	(b) Different model characteristics after local update.
	(c) Model aggregation at the server, which may drown out minority clients.
	(d) One-size-fits-all global model performs well in general, but poorly on minority clients.
	}
	\label{fig:FL_illu}
\end{figure}

While federated learning has demonstrated promise for user data privacy, a major challenge is statistical heterogeneity~\cite{li2020federated, li2018fedprox, arivazhagan2019federated, hao2020waffle}: data distributions between clients may exhibit significant differences. 
These differences may lead to variance in learned local models after training on each client's local data.
Additionally, the formulation in \cite{mcmahan2016communication} is fundamentally a one-size-fits-all solution, meaning the learned global model may perform worse for some clients, as shown in Figure~\ref{fig:FL_illu}.
As a result of these factors, federated learning methods tend to perform poorer when the data are not independent and identically distributed (\textit{i.i.d.}) among clients~\cite{mcmahan2016communication, zhao2018federated}.

What is more concerning, however, is that the accuracy loss due to statistical heterogeneity may be borne unequally among clients~\cite{li2019fair}.
In populations with unequally sized subgroups, clients with less common classes tend to see worse performance~\cite{hao2020waffle}.
This may be, in part, due to catastrophic forgetting~\cite{McCloskey1989,shoham2019overcoming}: clients from outside a subpopulation have a tendency to forget features not found in their own data and, during aggregation, the less represented clients may have their learned features drowned out when the model weights are averaged.
In the real world, these client characteristics may represent ethnicity~\cite{klare2012face}, gender~\cite{albiero2020analysis, leavy2018gender}, age~\cite{albiero2020does}, language~\cite{gu2018universal}, dialect, demographics, animal species, or disease trait.
Therefore, the inability to cope with statistical heterogeneity my lead to potentially unfair algorithms, that provide in- accurate classifications based on certain characteristics of their input data.
A popular and effective strategy for preventing forgetting is replay~\cite{lopez2017gradient, Shin2017}: storing a small buffer of samples for rehearsal.
In federated learning, however, clients do not have access to data from parts of the distribution that are not well-represented in their own data.
This is, in part, by design, as client data are kept private and local to the device.

In this work, we propose a federated learning system with zero-shot data augmentation (Fed-ZDA) to generate pseudo-exemplars of unseen classes, without having access to the private data.
Such a strategy preserves the model's ability on previously sampled client data when learning the local client update.
This makes the model less likely to lose representational ability for parts of the distribution that are rarer.
We explore two strategies for using zero-shot data augmentation for federated learning, one in which synthetic samples are generated at the client (Fed-ZDAC), and another where they are generated at the server (Fed-ZDAS). Both methods are illustrated in Figure~\ref{fig:zdac_zdas}.
Differential privacy analysis shows that our proposed approach satisfies $(0,\delta)$ differential privacy.
Finally, experiments on MNIST~\cite{Lecun1998}, FMNIST~\cite{xiao2017fashion}, and CIFAR-10~\cite{Krizhevsky2009} show that both Fed-ZDAC and Fed-ZDAS result in more equitable model performance.

\section{Related Work}
\subsection{Federated Learning} \label{subsec:shfl}
\paragraph{Statistical Heterogeneity}
Statistical heterogeneity of the data distributions of client devices has long been recognized as a challenge for federated learning~\cite{zhao2018federated}. Despite acknowledging statistical heterogeneity, many federated learning algorithms still focus on learning a single global model~\cite{mcmahan2016communication}; such an approach often suffers from divergence of the model, as local models may vary significantly from each other. To address this challenge, a number of works break away from the single-global-model formulation. Several~\cite{smith2017federated,corinzia2019variational} have cast federated learning as a multi-task learning problem, with each client treated as a separate task. FedProx~\cite{li2018fedprox} adds a proximal term to account for statistical heterogeneity by limiting the impact of local updates. In \cite{zhao2018federated} performance degradation from skewed data is recognized, proposing global sharing of a small subset of data which, while effective, may compromise privacy.

\paragraph{Fairness} There has been rising interest in developing fair methods for machine learning~\cite{yucer2020exploring}. However, such concerns have been less addressed in federated learning. A commonly used fairness definition has been proposed in \cite{zafar2017fairness}. However, it forces the accuracy to be identical on each device across hundreds to millions of clients, given the significant variability of data in the network. Recent work~\cite{li2019fair} has taken a step towards
addressing this by introducing uniformity to describe the fairness in federated learning, in which the goal is instead to ensure that the underfit groups are assigned more weight in the global learning objective. However, the proposed objective causes a performance drop in clients who could have better results under traditional federated average objective, which may reduce these clients' incentive to participate the federated learning process. The work in \cite{hao2020waffle} proposed rank-one factorization on model parameters to ensure consistent model performance across clients, by leaving factors locally. However, this Bayesian approach usually costs more training time, and development of client-specific models is beyond the single-global-model focus of this paper.

\subsection{Zero-Shot Data Augmentation} \label{subsec:mi}
Deep learning performance is highly dependent on the quantity of data available~\cite{halevy2009unreasonable, sun2017revisiting}. Data augmentation, which inflates the size of a dataset without necessitating further data collection, has proven effective in a wide range of settings~\cite{Lecun1998,krizhevsky2012imagenet, zhang2018mixup, liang2021mixkd}, improving machine learning model generalization.
However, most data augmentations apply transformations to the existing data, thus making the implicit assumption that at least some data is available.
These techniques are thus difficult to apply when no data is available. Consequently, \textit{DeepInversion}~\cite{yin2020dreaming} proposes a data-free knowledge transfer based on synthesizing data, effectively providing more teacher behavior for a student to learn. Also, \cite{cai2020zeroq} proposes a similar method for network quantization, by updating random input to match stored batch norm layer statistics. In our work, since the server has no access to the local data, synthesizing a reasonable amount of fake data for deficient classes would encourage a more fair global model. Also, unlike the work~\cite{jeong2018communication,zhao2018federated} which violates the rule that clients should never share data to other clients or the server, zero-shot data augmentation synthesize data based on the model information only. 
Note that using synthesized samples for data augmentation differs from related works like \cite{goetz2020federated}, which take an approach similar to dataset distillation~\cite{wang2018dataset} to synthesize data for the purpose of compressing model updates for communication efficiency purposes.

\subsection{Differentially Private Federated Learning} \label{subsec:dp}
With the increasing awareness of data security and confidential user information, privacy has become an important topic for machine learning systems and algorithms. In order to solve this issue, differential privacy has been proposed to prevent revealing training data~\cite{abadi2016deep}. Even though federated learning enables local training without sharing the data to the server, it is still possible for an adversary to infer the private information to some extent, by analyzing the model parameters after local training~\cite{wang2019beyond,ma2020safeguarding}. Therefore, combining differential privacy with federated learning has been studied in many previous works. To ensure federated learning approaches satisfy differential privacy, the work in \cite{geyer2017differentially} proposed a client level perspective by adding Gaussian noise to the model update, which can prevent the leakage of private information  and achieve good privacy performance. In \cite{truex2019hybrid}, a combination of differential privacy and secure multiparty computation was proposed to block differential attacks. However, previous approaches based on adding noise to model parameters struggle to capture the appropriate trade-off between the model performance and privacy budget. Our proposed zero-shot data augmentation can be interpreted as a new randomization mechanism different from adding Gaussian noise, satisfying differential privacy without hurting model performance.

\section{Federated Learning with Zero-Shot Data Augmentation}
\begin{figure*}[t!]
    \centering
    \begin{subfigure}
        \centering
        \includegraphics[height=1.5in]{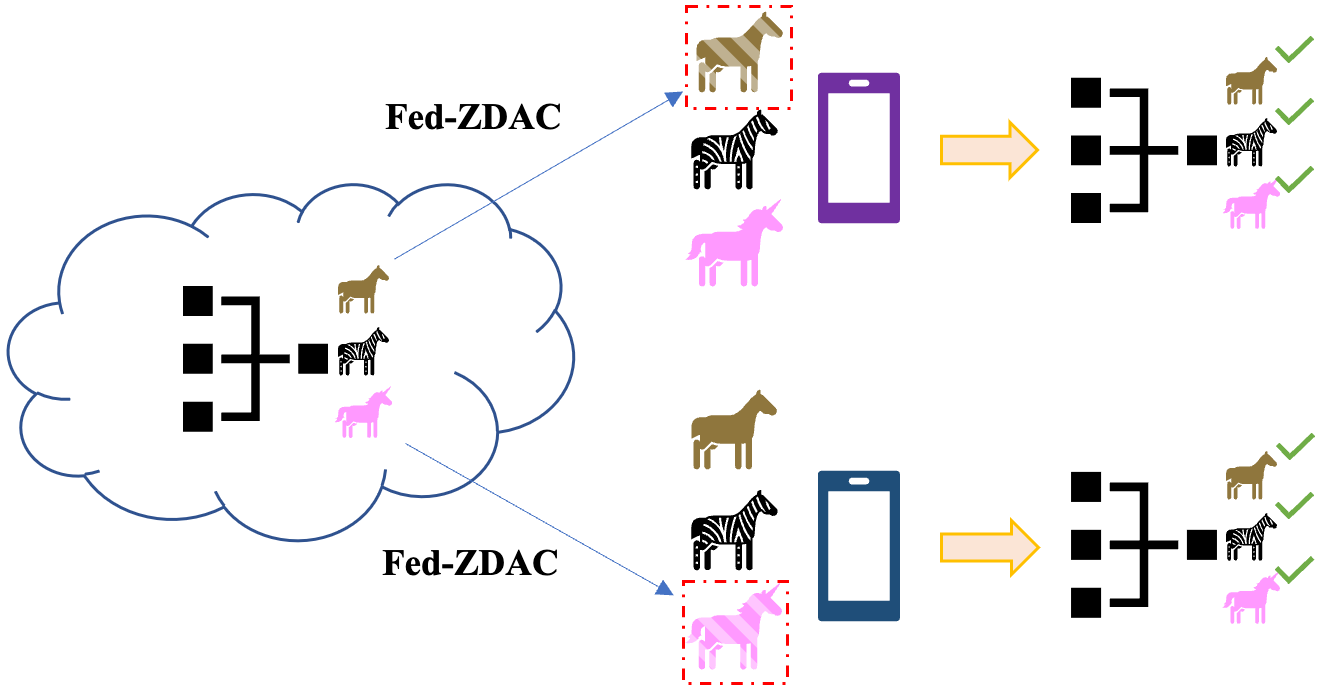}
    \end{subfigure}%
    \hfill
    \begin{subfigure}
        \centering
        \includegraphics[height=1.5in]{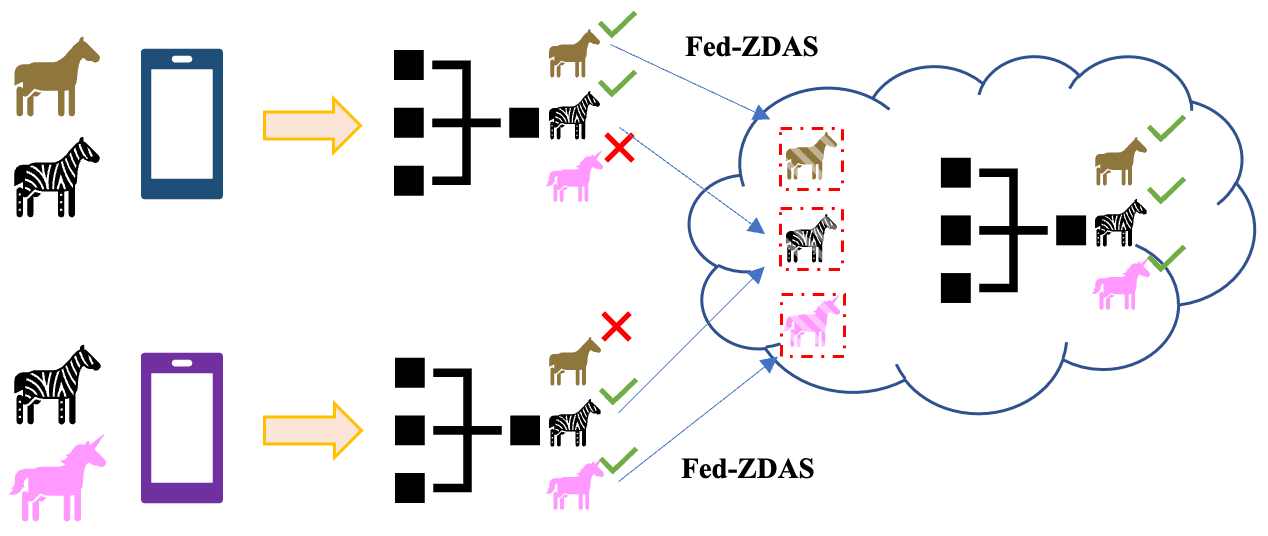}
    \end{subfigure}
    \caption{Illustration of Fed-ZDAC (left) and Fed-ZDAS (right). In Fed-ZDAC, clients train the model after data augmentation. In Fed-ZDAS, the server distributes the model after training on augmented fake data.}
    \label{fig:zdac_zdas}
\end{figure*}

We propose a federated learning method with zero-shot data augmentation (Fed-ZDA), for the purpose of improving the robustness and fairness of federated learning.
To improve the fairness of the global model, Fed-ZDA introduces new synthetic data, generated either at the server or at the client nodes, to supplement training with underrepresented samples.
Notably, these samples are generated without access to user data, but rather from shared models post-local update.
We start by reviewing standard federated learning, which Fed-ZDA builds on. We then describe the zero-shot data-augmentation method we use for Fed-ZDA. We describe two deployments of Fed-ZDA, Fed-ZDAC and Fed-ZDAS, where the zero-shot data-augmentation is done at the client nodes and at the server node, respectively. 

\subsection{Federated Learning}\label{subsec:fl}
In its most basic form, the federated learning objective is commonly expressed as the following:
\begin{equation}\label{eq:fl}
    \min_{w} f (w) =\sum_{i=1}^{Z}p_{i}F_{i}(w)
\end{equation}
where $F_{i}(w) \coloneqq \bbE_{\vx_{i} \sim \calD_i}[f_{i}(w;\vx_{i})]$ is the local objective function of the $i^{\mathrm{th}}$ client, $Z$ is the number of devices or clients, and $p_{i} \geq 0$ is a weight assigned to the $i^{\mathrm{th}}$ client. 

Standard federated learning aims to aggregate, at the centralized server, a federated global model from the client models, typically by averaging them. In this scenario, the  clients only share their trained models with the server, and do not share the datasets on which their models have been trained. The server and the client communicate for $T$ rounds to update the global model $M$. 
A single communication round contains three main steps: 
\begin{enumerate} 
\item The server randomly samples a subset of clients and distributes the model to the sampled clients. 
\item Each sampled client updates the model by training it with their local training data.  
\item Each client sends their updated model back to the server and the server aggregates the received client models into a new global model. 
\end{enumerate}

Typically, learning aggregates the models by federated averaging (FedAvg), in which the federated global model $M_{\mathcal{G},t}$ aggregated at the $t^{\mathrm{th}}$ communication round is simply a weighted average of all the client models received at this round. Let $M_{i,t}$ be the model trained by the $i^{\mathrm{th}}$ client $C_i$ at the $t^{\mathrm{th}}$ communication round, and $\mathcal{S}_t$ is the set of indices of the sampled clients at the $t^{\mathrm{th}}$ round.
\begin{equation} \label{eq:fedavg}
    M_{\mathcal{G},t} = \sum_{i \in \mathcal{S}_t} w_i M_{i,t}
\end{equation}
Different weights $0 \leq w_i \leq 1$ can be assigned to the clients depending on different factors, such as the amount of data they have been trained on, if such information is known at the server. Otherwise, a simple arithmetic mean is adopted.

Ideally, after sufficient communication rounds, the global model should converge to a solution that has learned using the data from all clients. However,  heterogeneous data distributions across clients may cause inconsistent model performance. In particular, if the dataset distributions of the different clients are  skewed towards a majority group of classes, FedAvg may result in a model with a large variance in accuracy across classes, resulting in a large variance in the global model accuracy on the data of different clients. Hence, standard federated learning suffers from the notion of unfairness towards the under-represented clients, providing poor accuracy on their data.

\subsection{Zero-Shot Data Generation} \label{subsec:dg}
Data augmentation has proven effective in many machine learning settings, such as when there is data scarcity or class imbalance. Commonly used techniques include performing transforms (\textit{e.g.} rotations, flips, crops, added noise) based on the original true data, combinations in feature space, and synthesizing data by generative models. However, these techniques require access to training data or at least a few data sample seeds.
In federated learning, these are not available, as data never leaves the individual clients, making conventional augmentation techniques challenging. 
In this work, we propose zero-shot data generation (ZSDG), to generate labeled synthetic data for data-augmentation at the clients, without having any access to any training data. This approach utilizes trained models (either the global model pre-update, or the local models post-update) to generate synthetic data of the desired classes without access to \textit{any} non-local data.

One way to generate synthetic data whose statistics match those of the original training data is to find the data that results in similar statistics as those stored in the batch normalization (BN) layers of the pretrained model. However, without assigning class labels to this data, one cannot use this data in a data-augmentation regime for supervised training. For data augmentation with $N$ possible classes, we generate data for each class $1 \leq n \leq N$, represented by its corresponding one-hot vector $\Bar{y}(n)$, which has $1$ at the $n^{\mathrm{th}}$ index and zero otherwise.
Let model $M$ be a neural network with $L$ layers. For simplicity of notation, assume the model has $L$ batch normalization (BN) layers and denote the activation before the ${\ell}^{\mathrm{th}}$ BN layer to be $z_{\ell}$. The ${\ell}^{\mathrm{th}}$ BN layer is parameterized by a mean $\mu_{\ell}$ and variance $\sigma_{\ell}$ calculated from the input feature maps when the model was being trained.  During the forward propagation, $z_{\ell}$ is normalized with the  parameters of the BN layer. Note that given a pretrained model $M$, batch norm statistics of all BN layers are stored and accessible.
  Given a target class $\Bar{y}(n)$ the ZSDG reduces to the optimization problem that finds the input data $\Bar{x}$ that result in the batch norm statistics matching those stored in the BN layers of the pretrained model, and are classified by the pretrained model as having label $\Bar{y}(n)$. Given the pretrained model $M$, with BN statistics $\mu_{\ell}$ and $\sigma_{\ell}$ stored in its layers $1 \leq \ell \leq L$, the ZSDG optimization problem to generate synthetic labeled data  $(\Bar{x}(n), \Bar{y}(n))$ for ${n\in\{1,2, \cdots, N\}}$ can be expressed as:
\begin{align}\textstyle \nonumber
   \Bar{x}(n)  =  \arg \min_{\Bar{x}} & \sum_{\ell=1}^{L}  \left\Vert \Bar{\mu}_{\ell} - \mu_{\ell} \right\Vert_{2}^{2} + \left\Vert \Bar{\sigma}_{\ell} - \sigma_{\ell} \right\Vert_{2}^{2}   \\& + \mathcal{H}(M(\Bar{x}), \Bar{y}(n)), \label{eq:cond_recon}   
\end{align} where  $\Bar{\mu}_{\ell}$, and $\Bar{\sigma}_{\ell}$ are, respectively, the mean and standard deviation evaluated at layer $\ell$ with the generated input data,  $M(\Bar{x})$ denotes the model classification output when the  input is $\Bar{x}$, and $\mathcal{H}$ is the cross entropy loss function to learn the class labels. To solve Equation~\ref{eq:cond_recon} for a selected class $\Bar{y}(n)$,  an input is initialized randomly from a normal distribution and, then, updated using gradient descent, while fixing the model parameters during back-propagation. The ZSDG is described in Algorithm~\ref{alg:zsdg}.  

\begin{algorithm}[h]
    \begin{algorithmic}[1]
	\caption{Zero-Shot Data Generation (ZSDG)}
	\label{alg:zsdg}
	\State {\bf Input:} Model $M$ with $L$ batch normalization layers
	\State {\bf Output:} A batch of labeled fake data: $(\Bar{x}, \Bar{y})$  
	\State Get $\mu_{\ell}$, $\sigma_{\ell}$ from Batch Normalization layers of $M$, $\ell \in \{1,2,\cdots, L\}$

\For  {$n=1,2, \cdots, N$}
	    \State Generate $\Bar{x}(n)$ randomly  from a Gaussian distribution, assign it a label  $\Bar{y}(n)$
\EndFor
\For  {$j=1,2, \cdots$}
	    \State Forward propagate $M(\Bar{x}(n))$ for all $n$
	    \State Gather intermediate activations $\Bar{z}_{\ell}$, $\ell \in \{1,2,..., L\}$
	    \State Gather BN statistics: $\Bar{\mu}_{\ell}$ and $\Bar{\sigma}_{\ell}$ induced by intermediate activations $\Bar{z}_{\ell}$, $\ell \in \{1,2,..., L\}$ 
        \State Compute the loss based on Equation~\ref{eq:cond_recon}
        \State Backward propagate and update the input $\Bar{x}(n)$ only
\EndFor
\State {\bf Return} $(\Bar{x}, \Bar{y}) = \cup_{n\in\{1,2, \cdots, N\}}(\Bar{x}(n), \Bar{y}(n))$ 	
	
	\end{algorithmic}
\end{algorithm}

\subsection{Zero-Shot Data Augmentation at Clients} \label{subsec:zsda_client}
It is common to have statistical heterogeneity in the training data across clients. To address the deficiency of their training data in some classes, and promote the global model fairness, clients are instructed to augment their training data with fake data using ZSDG, before updating the received global model. 
Let the $i^{\mathrm{th}}$ client at the $t^{\mathrm{th}}$ communication round have the real local training data  $(x_i, y_i)_t$ with input and label pairs. Let $(\Bar{x}_i, \Bar{y}_i)_t$ be the synthetic (fake) data generated using ZSDG over all classes from the received global model $M_{\mathcal{G},t-1}$. Then the procedure for federated learning with zero-shot data augmentation at the clients (Fed-ZDAC) is described by Algorithm~\ref{alg:zsda_c}.

\begin{algorithm}[t]
    \begin{algorithmic}[1]
	\caption{Fed-ZDAC: Federated Learning with Zero-Shot Data Augmentation at Clients}
	\label{alg:zsda_c}
	\State {\bf Input:}  Communication rounds $T$, global model $M$
	\For  {$t=1, \cdots, T$}
	    \State Server randomly selects subset $\mathcal{S}_t$ of clients
	    
	    \State Server sends $M_{\mathcal{G},t-1}$ to $\mathcal{S}_t$
	    \For  {Clients $C_i$, $i \in \mathcal{S}_t$ \textbf{in parallel}}
            \State Generate labeled fake data $(\bar{x}_i, \bar{y}_i)_t$  by ZSDG from the global model $M_{\mathcal{G},t-1}$ 
    	    \State Client $C_i$ produces the model $M_{i,t}$ by updating the model $M_{\mathcal{G},t-1}$ with the mix of real local data available at round $t$, and the fake ZSDG data: $\{(x_i, y_i)_t, (\bar{x}_i, \bar{y}_i)_t \}$ 
    	    \State Send the updated client model $M_{i,t}$ to the server.
        \EndFor
        \State Server aggregates all client models $M_{i,t}$, $i \in \mathcal{S}_t$, e.g. by Equation~\ref{eq:fedavg}, to obtain the updated $M_{\mathcal{G},t}$
	\EndFor
	\end{algorithmic}
\end{algorithm}

\subsection{Zero-Shot Data Augmentation at Server} \label{subsec:zsda_server}
In Section~\ref{subsec:zsda_client}, we discussed federated learning with data augmentation at the client nodes. In practice, clients may be mobile computing devices that are limited in their computing resources and storage capacity, which may restrict their capacity for data augmentation. Clients may also not care about fairness of the global model towards other clients, and would like to train the best model for their classes of interest only.  
It is also in the best interest of the server to produce a fair and accurate model, that does not ignore data classes of the under-represented clients.  
In addition, if the global model is fair, and  each client updates the global model from the same fair initialization, federated learning can convergence faster to a fair solution. Consequently, we propose federated learning with zero-shot data augmentation at the server (Fed-ZDAS). 
We use the same notation as described in Section~\ref{subsec:zsda_client}. In more detail, the server distributes its global model to a subset of clients, Each of these clients update this global model with their local training data $(x, y)$ and send it back  to the server. 
In strive for fairness, the server will generate equal amount of fake data from each received client model, and combines all fake client data into a balanced synthetic dataset. 
The server aggregates all received client models into a single model, and then trains the single model by the combined synthetic dataset. To our knowledge, this is the first federated learning protocol which involves training at the server, since in general the server is assumed not to have any data.
Fed-ZDAS is described in Algorithm~\ref{alg:zsda_s}.

\begin{algorithm}[t]
    \begin{algorithmic}[1]
	\caption{Fed-ZDAS: Federated Learning with Zero-Shot Data Augmentation at the Server}
	\label{alg:zsda_s}
	\State {\bf Input:}  Communication rounds $T$, global model $M$
	\For  {$t=1, \cdots, T$}
	    \State Server randomly selects subset $\mathcal{S}_t$ of clients
	    
	    \State Server sends $M_{\mathcal{G},t-1}$ to $\mathcal{S}_t$
	    \For  {Clients $C_i$, $i \in \mathcal{S}_t$ \textbf{in parallel}}
	      \State Client $C_i$ produces the model $M_{i,t}$ by updating the model $M_{\mathcal{G},t-1}$ with its real local data available at round $t$ $(x_i, y_i)_t$ 
    	    and sends the updated client model $M_{i,t}$ to the server.
    	 \EndFor
        
          	\State Server generates a class-balanced fake labeled data $(\bar{x}_i, \bar{y}_i)_t$ by ZSDG with each received client model $M_{i,t}$, $i \in \mathcal{S}_t$. 
    	\State Server combines the fake  data generated with the different client models into a combined balanced dataset $(\bar{x}, \bar{y})_t = \cup_{i \in \mathcal{S}_t} (\bar{x}_i, \bar{y}_i)_t.$
    	\State Server aggregates all client models $M_{i,t}$, $i \in \mathcal{S}_t$, e.g. by Equation~\ref{eq:fedavg}, to obtain an interim global model $\tilde{M}_{\mathcal{G},t}$
        \State Server trains $\tilde{M}_{\mathcal{G},t}$ using the combined fake dataset $(\bar{x}, \bar{y})_t$ to produce the updated global model $M_{\mathcal{G},t}$ 
        \EndFor
	\end{algorithmic}
\end{algorithm}

Since the motivation of federated learning is protecting client data privacy, we also prove that our proposed method satisfies client-level differential privacy (DP), a local differential privacy adopted as \cite{wei2020federated}. Intuitively, before clients send updated model parameters back to the server, we seek for a randomized perturbation on these model parameters such that the server can not distinguish if certain client has been involved in the current communication round. A standard way to satisfy differential privacy is adding Gaussian noise to model parameters with trade-off between model's performance and privacy budget.~\cite{abadi2016deep,wei2020federated}. In contrast, our method can be considered as a kind of perturbation to model parameters with useful information as opposed to pure random noise. As a result,   
we show that our proposed method satisfies $(0,\delta)$ differential privacy. For more details about the proof, please check Appendix~\ref{app:dp}.

\section{Experiments} \label{sec:experiments}
\subsection{Datasets and Settings}\label{subsec:data_setting}
\paragraph{Task and Datasets} We conduct experiments on three standard datasets: MNIST~\cite{Lecun1998}, FMNIST~\cite{xiao2017fashion}, and CIFAR-10~\cite{Krizhevsky2009}. Following \cite{mcmahan2016communication} for the federated learning setting, the server selects a proportion $\gamma=0.1$ of 100 clients during each communication round, with $T=100$ total rounds for all methods. Each selected client trains their own model for $E=5$ local epochs with mini-batch size $B=10$. For the data partition, we focus on the non-\textit{i.i.d.} setting, which is typically more challenging and realistic for federated learning. We divide the 60k images into a training set of 50k images and external test set of 10k images, then the training set is distributed to the clients, such that each client only has a subset $Z$ of the classes, and divide their local data set as local training set and local testing set.

Following \cite{hao2020waffle}, we study two data splits, each representing different types of statistical heterogeneity.
The first is unimodal non-\textit{i.i.d.} which is identical to the data partition introduced by \cite{mcmahan2016communication}. The second is multimodal non-\textit{i.i.d.}, in which there exists subpopulations, with some being more prevalent than others. Each subpopulation group can be thought of as a mode of the overall distribution. In other words, the classes are imbalanced in the data set aggregating from all clients' data.

\begin{table*}[h]
  \centering
    \begin{tabular}{|l|l|cc|cc|}
    \hline
     & & \multicolumn{2}{c|}{Unimodal} & \multicolumn{2}{c|}{Multimodal} \\
     Dataset & Method  & Mean Accuracy $\uparrow$ & Variance $\downarrow$ &  Mean Accuracy$\uparrow$ & Variance$\downarrow$\\
    \hline\hline
    \multirow{5}{*}{MNIST}   & FedAvg &97.98$\pm$0.01 &6.70$\pm$1.21 &96.67$\pm$0.73 & 47$\pm$27\\
              & FedProx   &97.93$\pm$0.01&6.33$\pm$1.25 &91.98$\pm$0.80 & 72$\pm$6 \\
 
               & q-FFL  & 95.84 $\pm$0.45&17.00$\pm$9.20 &94.81$\pm$7.55 & 78$\pm$20\\
              \rowcolor{Gray}
              \cellcolor{white}& Fed-ZDAC  & \textbf{98.23}$\pm$0.22 &\textbf{3.54}$\pm$0.85 &\textbf{97.07}$\pm$0.56  & \textbf{27}$\pm$12\\
              \rowcolor{Gray}
              \cellcolor{white}& Fed-ZDAS  & 97.34$\pm$0.61 &6.22$\pm$0.33 &95.49$\pm$0.99  & 49$\pm$22\\
    \midrule
    \multirow{5}{*}{FMNIST} & FedAvg &85.30$\pm$2.67&368$\pm$222 &83.43$\pm$2.28 & 245$\pm$41\\
              & FedProx  &85.64$\pm$2.19& 360$\pm$215 &83.37$\pm$2.04 & 237$\pm$38\\
              
              & q-FFL  &83.09$\pm$0.36&283$\pm$45 &85.97$\pm$0.18 & 175$\pm$10\\
              \rowcolor{Gray}
              \cellcolor{white}& Fed-ZDAC  & 84.65$\pm$2.81 &280$\pm$112 &\textbf{86.00}$\pm$0.07  & 161$\pm$40\\
              \rowcolor{Gray}
              \cellcolor{white}& Fed-ZDAS  & \textbf{86.23}$\pm$2.09 &\textbf{188}$\pm$67 &85.66$\pm$0.85  & \textbf{135}$\pm$11\\
    \midrule
    \multirow{5}{*}{CIFAR-10}   & FedAvg  &\textbf{50.30}$\pm$0.91& 417$\pm$190 &45.53$\pm$1.30 & 288$\pm$98\\
              & FedProx & 49.92$\pm$0.55&416$\pm$186 &\textbf{45.88}$\pm$1.44 & 266$\pm$100\\
              
              & q-FFL & 41.72$\pm$3.00 &\textbf{285}$\pm$115 &38.25$\pm$1.12 & \textbf{243}$\pm$49\\
              \rowcolor{Gray}
              \cellcolor{white}& Fed-ZDAC   &47.18$\pm$1.55&337$\pm$155 &43.92$\pm$1.66 & 244$\pm$70\\
              \rowcolor{Gray}
              \cellcolor{white}& Fed-ZDAS  & 47.78$\pm$1.02 &325$\pm$145 &42.18$\pm$0.81 & \textbf{243}$\pm$64\\
    \bottomrule
    \end{tabular}%
    \caption{Local test performance and client level fairness.}
  \label{tab:run_stats}%
\end{table*}%

\begin{table*}[h]
  \centering
    \begin{tabular}{|l|l|cc|cc|}
    \hline
     & & \multicolumn{2}{c|}{Unimodal} & \multicolumn{2}{c|}{Multimodal} \\
     Dataset & Method  & External Accuracy $\uparrow$ & Variance $\downarrow$ &  External Accuracy$\uparrow$ & Variance$\downarrow$\\
    \hline\hline
    \multirow{5}{*}{MNIST}   & FedAvg &98.02$\pm$0.14 &3.69$\pm$0.55 &93.54$\pm$2.38 & 78$\pm$48\\
              & FedProx   &98.05$\pm0.15$ & 3.69$\pm$0.60 &93.62$\pm$2.38 & 75$\pm$58 \\
 
              & q-FFL  & 95.76 $\pm$0.56 &7.40$\pm$2.02 &92.56$\pm$0.29 & 63$\pm$2\\
              \rowcolor{Gray}
              \cellcolor{white}& Fed-ZDAC  & \textbf{98.21}$\pm$0.08 &\textbf{1.71}$\pm$0.21 &\textbf{95.66}$\pm$0.72  & \textbf{22}$\pm$7\\
              
              \rowcolor{Gray}
              \cellcolor{white}& Fed-ZDAS  & 97.66$\pm$0.08 &2.11$\pm$0.39 &94.10$\pm$0.75  & 40$\pm$16\\
    \midrule
    \multirow{5}{*}{FMNIST} & FedAvg &\textbf{85.03}$\pm$1.54 &435$\pm$296 &79.18$\pm$2.0 & 779$\pm$46\\
              & FedProx  &84.94$\pm$1.19 & 426$\pm$274 &79.13$\pm$1.80 & 794$\pm$33\\
              
              & q-FFL  &80.99$\pm$1.23 & 558$\pm$192 &81.24$\pm$0.43 & 673$\pm$12\\
              \rowcolor{Gray}
              \cellcolor{white}& Fed-ZDAC  & 83.13$\pm$2.56 &263$\pm$101 & \textbf{83.41}$\pm$0.26 & 483$\pm$84\\
              \rowcolor{Gray}
              \cellcolor{white}& Fed-ZDAS  & 83.90$\pm$1.56 &\textbf{260}$\pm$76 & 83.27$\pm$0.25  & \textbf{313}$\pm$68\\
    \midrule
    \multirow{5}{*}{CIFAR-10}   & FedAvg  & 48.89$\pm$1.04& 473$\pm$195 &\textbf{41.74}$\pm$4.30 & 361$\pm$154\\
              & FedProx & 48.83$\pm$0.89 &258$\pm$13 & 37.06$\pm$0.62 & 480$\pm$50 \\
              
              & q-FFL &34.01$\pm$4.46 & 370 $\pm$135 &32.83 $\pm$0.89 & \textbf{218} $\pm$38\\
              \rowcolor{Gray}
              \cellcolor{white}& Fed-ZDAC  &\textbf{49.50}$\pm$0.27 & 378 $\pm$108 & 40.18$\pm$2.59 & 288$\pm$19\\
              \rowcolor{Gray}
             \cellcolor{white} & Fed-ZDAS  & 48.26$\pm$1.02 &\textbf{200}$\pm$69 &39.07$\pm$1.85 & 295$\pm$98\\
    \bottomrule
    \end{tabular}%
    \caption{Global Test Performance and class level fairness.}
  \label{tab:run_stats1}%
\end{table*}%

\paragraph{Model Architecture} Our zero-shot data augmentation requires the model to contain batch normalization layers. For both MNIST and FMNIST, we use a convolutional network consisting of two $5\times5$ convolution layers with 16 and 32 output channels, respectively. 
Each convolution layer is followed by a batch normalization layer and a $2\times 2$ max-pooling operation with ReLU activations. 
A fully connected layer with a softmax is added for the output. For CIFAR-10, we use a convolutional network consisting of two $3\times3$ convolution layers with 16 filters each. Each convolutional layer is followed by a batch normalization layer and a $2\times 2$ max-pooling operation with ReLU activations. These two convolutions are followed by two fully-connected layers with hidden size 80 and 60, with a softmax applied for the final output probabilities. We utilize SGD as the optimizer and set the learning rate as 0.02 for all methods. We compare our methods with three baselines: FedAvg~\cite{mcmahan2016communication}, FedProx~\cite{li2018fedprox} and q-FFL~\cite{li2019fair}.

\subsection{Local Test and Client-Level Fairness} \label{subsec:local_test}
Local test performance is a metric to evaluate the aggregated model on each client's local test set, that is usually class imbalanced. It is an important metric to demonstrate the personalization ability of the aggregated model. As with \cite{li2019fair}, the variance of local test performance across all clients is taken as the fairness metric. Lower variance means the learned model does not lean towards subpopulations who share prevalent data distributions, which is a more fair solution. This metric can be considered as fairness on clients level. We test all methods under both unimodal non-\textit{i.i.d.} and multimodal non-\textit{i.i.d.} The results are listed in Table~\ref{tab:run_stats}. The mean accuracy is the average local test accuracy over all clients and the variance is the client level fairness metric. The standard deviation values are calculated based on the results of different trials by changing random seeds. For MNIST and FMNIST, the proposed method not only achieves the best mean accuracy, but also improves the fairness over all baselines. For CIFAR-10, our method achieves better accuracy than q-FFL and more fairness than FedAvg and FedProx.     
\begin{figure*}[t]
  \subfigure[\label{fig:synmnist}MNIST]{
      \includegraphics[width=0.32\textwidth]{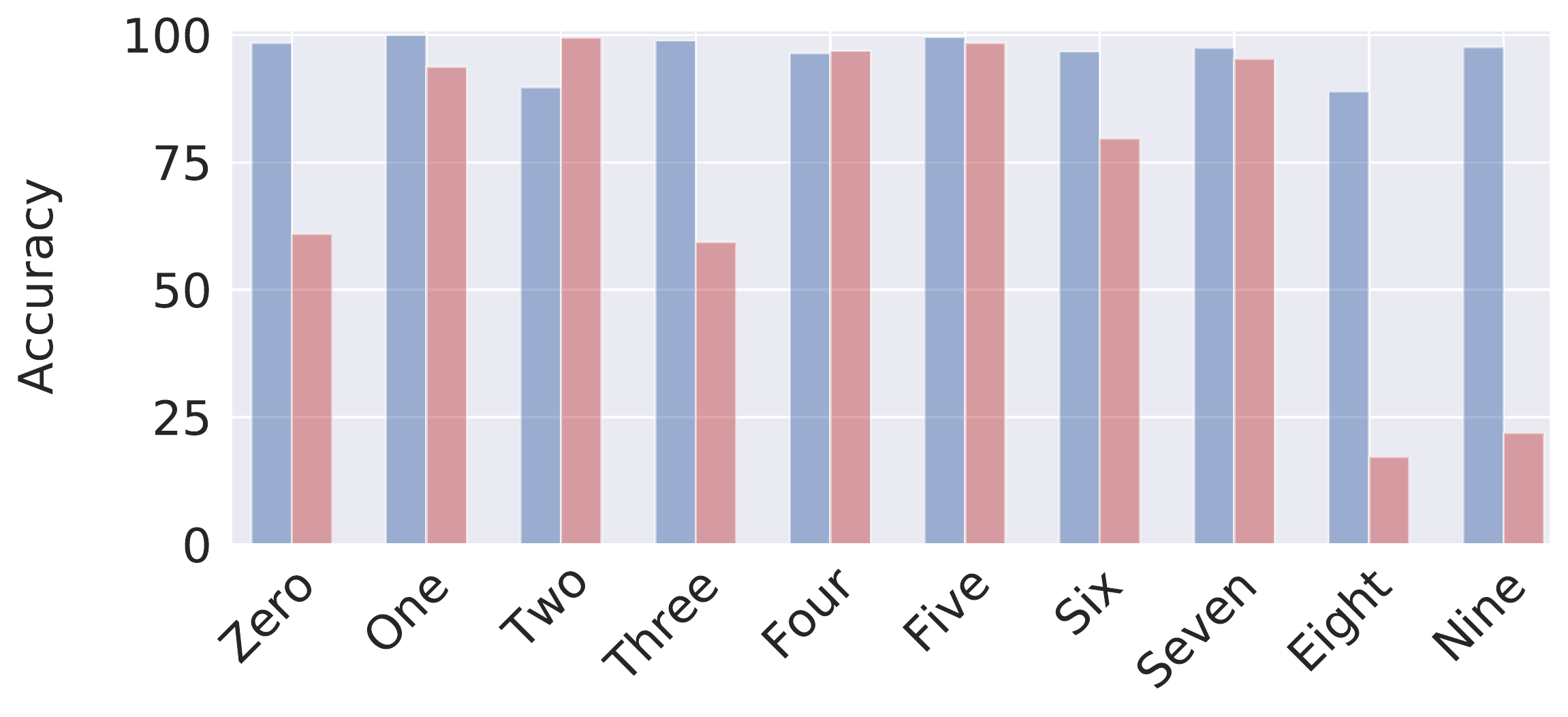}}
  \subfigure[\label{fig:synfmnist}FMNIST ]{
      \includegraphics[width=0.32\textwidth]{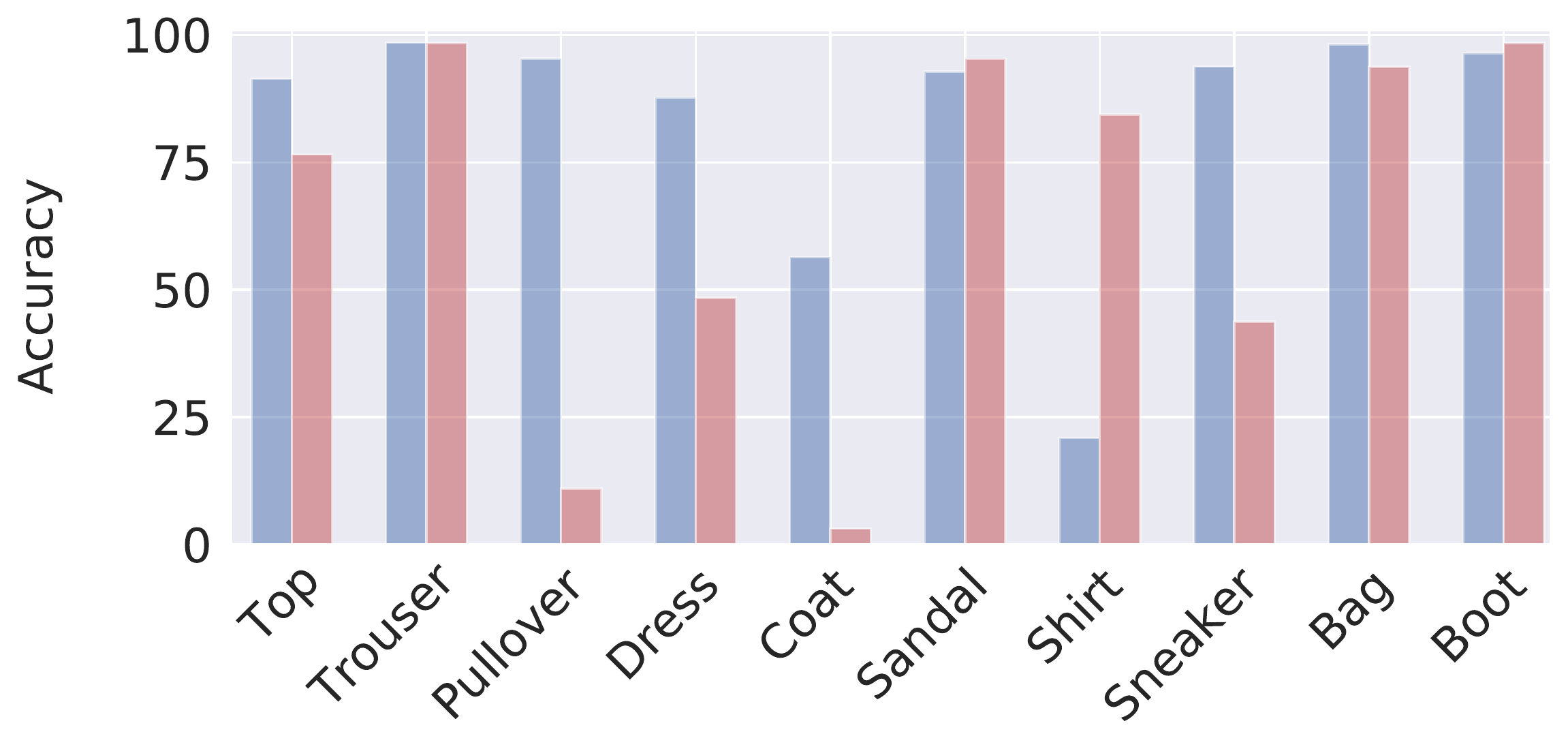}}
  \subfigure[\label{fig:syncifar}CIFAR-10]{
      \includegraphics[width=0.32\textwidth]{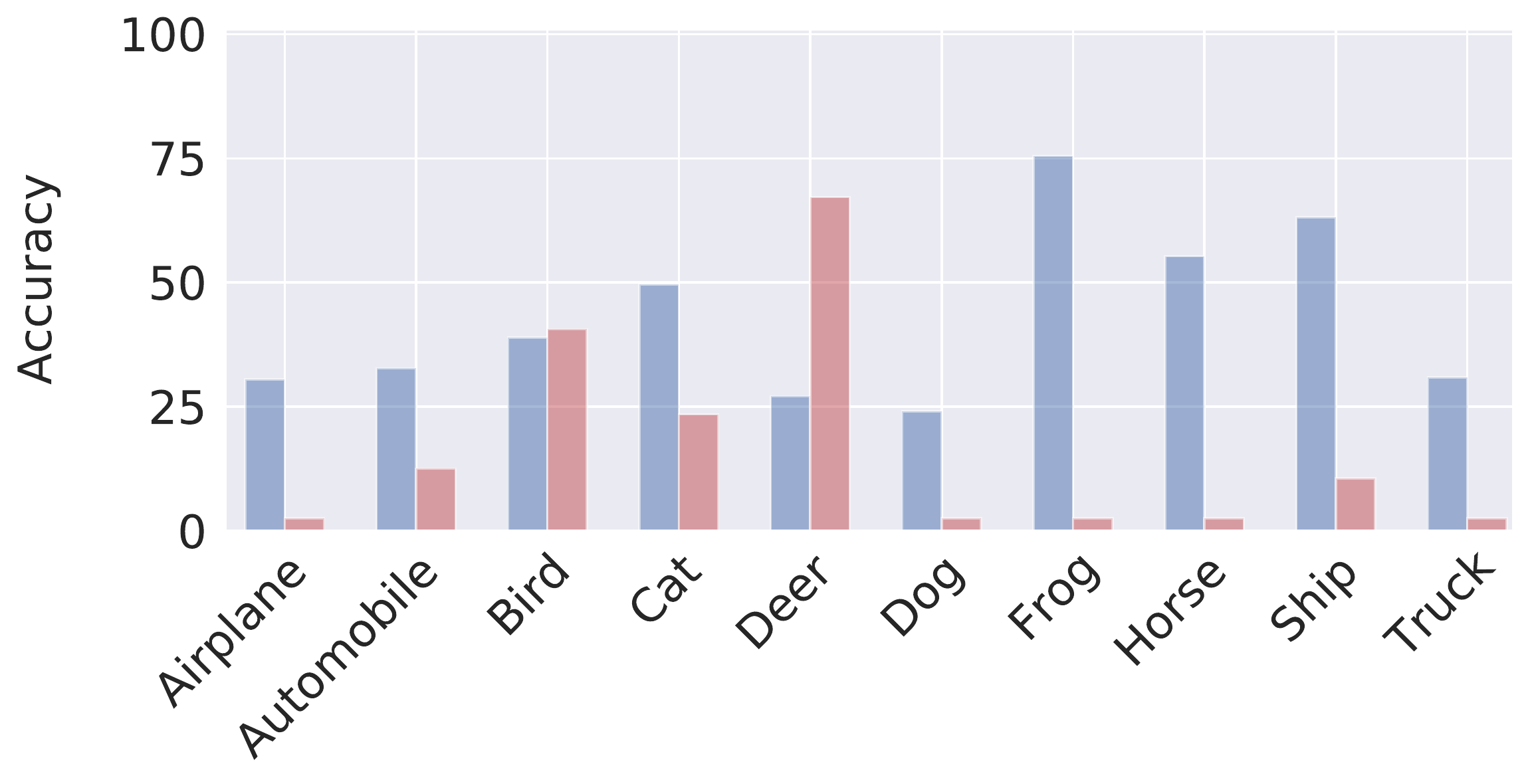}}\\
\caption{\label{fig:quality}The evaluation on augmented data of each class. The blue bars are the trained model's ability. The red bars are the accuracy of the augmented data that is queried from oracle classifiers.}
\end{figure*}

\subsection{Global Test and Class-Level Fairness}\label{subsec:global_test}
Global test performance is a metric to evaluate the aggregated model on an external test set, that is usually class balanced. This is an important criterion to justify the efficiency of the federated learning mechanism and the model's performance on newly coming clients. However, it is still a metric based on average which cannot fully capture whether the model is biased towards, if exists, any prevalent class distribution. We report the variance of accuracy across classes as an extra fairness metric on class level. In Table~\ref{tab:run_stats1}, the external accuracy is the accuracy of the federated model on the held out test set, and the variance is class level fairness metric. We observe better performance on MNIST and FMNIST and comparable results on CIFAR-10. Similarly, all the standard deviation values are calculated based on the results of different trials by changing random seeds. 

\subsection{The Analysis of Augmented Data} \label{subsec:da_analysis} 
The augmented data are generated conditioned on the given label. To study the quality of the synthesized data, we separately trained three classifiers of the same architecture using the optimizer and learning rate described in Section~\ref{subsec:data_setting}, but in a centralized way for MNIST, FMNIST, and CIFAR-10. After training, each classifier achieves test accuracy $99.06\%$, $89.79\%$ and $67.32\%$, respectively. These classifiers are taken as the standalone oracle to evaluate the augmented data. To obtain the synthesized data for test, we run ZSDG based on the model trained in federated learning under multimodal non-\textit{i.i.d.} setting. For each class, we generate 64 images as the test data. The test results for augmented images are shown in Figure~\ref{fig:quality}. We also list the trained model's ability to recognize each class as comparison. In general, the accuracy of synthesized data reflects the ability to `fool' the oracle classifier, \textit{i.e.} the ability to reduce the local data distribution divergence among clients. Since each client owns at most two classes in our experimental setting, the statistical heterogeneity can be mitigated, as long as deficient class of images is synthesized by ZSDG.

\subsection{The Influence of Client Data Distribution} \label{subsec:da_noniid} 
As mentioned in Section~\ref{subsec:da_analysis}, the quality of synthetic data depends on the performance of the model we invert, and the model performance is highly affected by the client data distribution. To further study the influence of the client data distribution on data augmentation quality, we compare the models $f_a$, $f_b$, and $f_c$ learned by three different algorithms:
\begin{itemize}
    \item $f_a$: the model trained by a regular machine learning process on aggregated dataset
    \item $f_b$ : the model trained by federated learning framework on distributed dataset following an \textit{i.i.d} setting
    \item $f_c$: the model trained by federated learning framework on distributed dataset following non-\textit{i.i.d.} setting.Each client has at most 3 out of 10 classes of the images
\end{itemize}
We utilize the standard ResNet34 architecture and train it on CIFAR-10 dataset. The models' performance on test dataset for $f_a$,$f_b$, and $f_c$ are $95.20\%$,$73.96\%$ and $58.10\%$, respectively. In other words, the model's performance decreases as more constraints put in the learning process, which is expected. Consequently, we invert the models and observe the quality of the synthetic images of the same target labels decreases as shown in Figure~\ref{fig:dadc}. This result not only validates that the quality of the synthetic data depends on the base model's performance but also suggests a burn-in stage before model inversion in the federated learning framework which is studied in Section~\ref{subsec:when_da}.

\begin{figure}[t]
  \subfigure[\label{fig:fa}Synthetic images from $f_a$]{
      \includegraphics[width=0.47\textwidth]{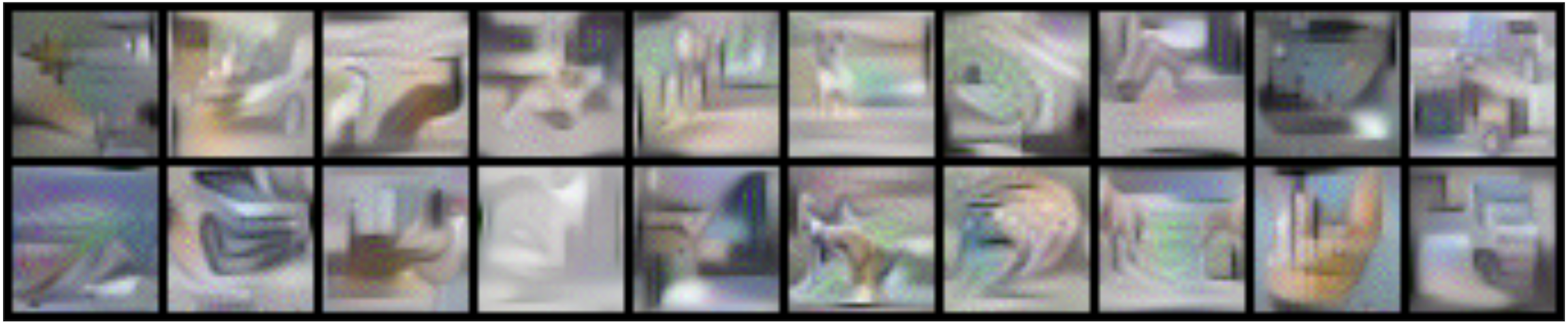}} \\
  \subfigure[\label{fig:fb}Synthetic images from $f_b$ ]{
      \includegraphics[width=0.47\textwidth]{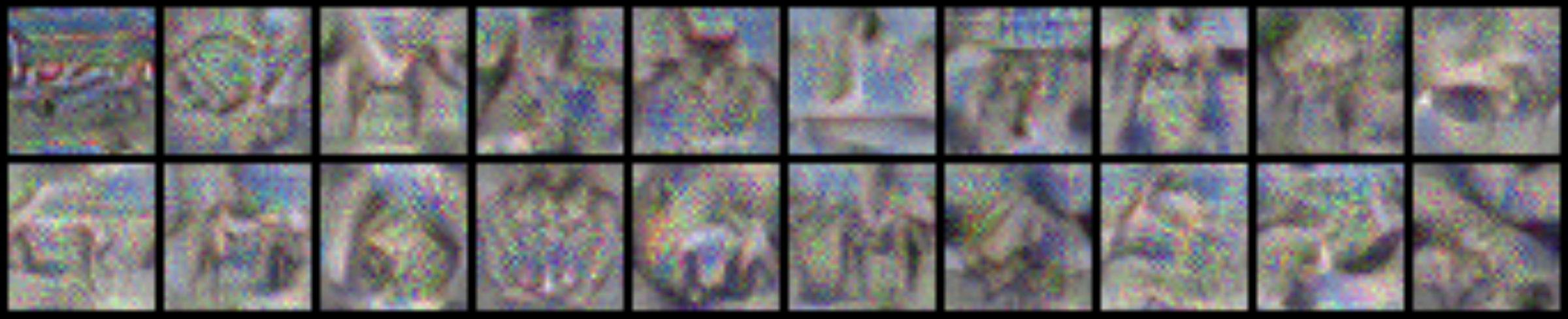}} \\
  \subfigure[\label{fig:fc}Synthetic images from $f_c$]{
      \includegraphics[width=0.47\textwidth]{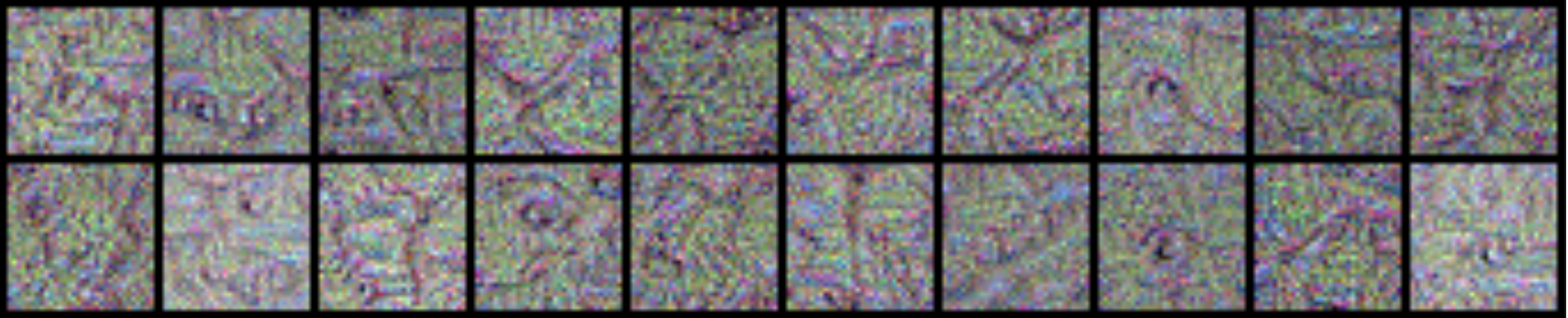}}\\
\caption{\label{fig:dadc}The images recovered from models learned by three different learning algorithms.}
\end{figure}

\subsection{When to start data augmentation}\label{subsec:when_da}
It is important to choose the starting point for when the data augmentation is triggered, as the quality of reconstructed data highly depends on the model performance, which increases as the communication rounds between the server and clients climbs. High-quality augmented data help shrink the divergence of local data distribution among clients and improves privacy, therefore increasing the difficulty for the adversary to tell if certain clients have participated in training. Bad augmented data, such as random noise, can also help maintain privacy, but is likely to erase the useful information in learned model. We study the influence of starting epoch when data augmentation happens. We compare the Fed-ZDAC's fairness performance under the multimodal non-\textit{i.i.d.} setting when the data augmentation starts from global epoch 80, 90, and 95, with the results shown in Figure~\ref{fig:when}. In federated learning, usually longer training epoch leads to solutions with better performance. As a result, the augmented data with higher quality make each client's local data distribution more similar, and contribute to reduce the variance more.

\begin{figure}[t]
	\centering
	\includegraphics[width=0.47\textwidth]{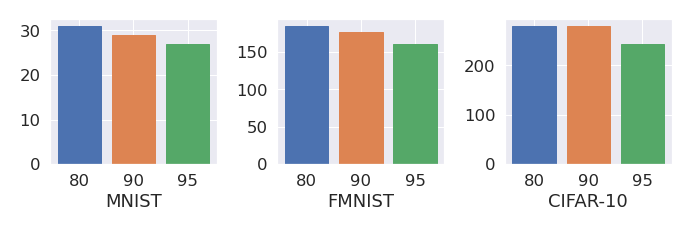}
	\caption{The influence of data augmentation starting point. The horizontal axis is the start global epoch and the vertical axis is the variance.}
	\label{fig:when}
\end{figure}

\section{Conclusions and Future Work}\label{sec:conclusion}
To promote fairness and robustness in federated learning, we propose a federated learning system with zero-shot data augmentation, with possible deployments at the  server (Fed-ZDAS), or at the clients (Fed-ZDAC). We provide a differential privacy analysis. We note that such methods only utilize the statistics of the shared models to generate fake data. Empirical results demonstrate our method achieves both better performance and fairness over commonly used federated learning baselines. For future research, we would like to investigate the combining of Fed-ZDAS and Fed-ZDAC in the same communication round, or at alternate rounds. Similarly, for clarity of the analysis in this paper, we assumed that Fed-ZDAC and Fed-ZDAS are deployed on top of the FedAvg with a simple arithmetic mean aggregation at the server. For future research, we would like to study the effect of deploying ZSDG on top of more complex aggregation schemes.

\subsubsection*{Acknowledgments}
CC is partly supported by the Verizon Media FREP program.

\clearpage
\newpage

{\small
\bibliographystyle{ieee_fullname}
\bibliography{ref}
}

\clearpage
\appendix
\section{Differential Privacy Analysis}\label{app:dp}
We analyze the differential privacy of our proposed methods, adopting the same definition as \cite{wei2020federated} for differential privacy in randomized mechanisms. 
We show that our proposed method satisfies $(0,\delta)$ differential privacy or $(0,\delta)$-DP for short.

\begin{definition}[$(\epsilon, \delta)$-$DP$]\label{def: def1} 
 {A randomized mechanism $\mathcal{M}:\mathcal{X}\rightarrow \mathcal{R}$  with domain $\mathcal{X}$ and range $\mathcal{R}$ satisfies $(\epsilon, \delta)$-$DP$ if for all measurable sets $\mathcal{S}\subset \mathcal{R}$ and for any two adjacent databases $\mathcal{C}$ and $\mathcal{C}^\prime \in \mathcal{X}$,
\begin{align*}
    P(\mathcal{M}(\mathcal{C})\in \mathcal{S})\leq e^\epsilon P(\mathcal{M}(\mathcal{C}^\prime)\in \mathcal{S})+\delta
\end{align*}}
\end{definition}

Since we focus on the client level perspective, the databases $\mathcal{C}$ and $\mathcal{C}^\prime$ here are the sets of clients, which differ on one client only, c and $c^\prime$, {\it i.e.},
\begin{align}
    \mathcal{C}=c \cup \mathcal{C}_0,\nonumber\\
    \mathcal{C}^\prime=c^\prime \cup   \mathcal{C}_0. \label{eq:clients}
\end{align}

Here, we denote the distributions of the datasets $D$ and $D^\prime$ of the two client sets  $\mathcal{C}$ and $\mathcal{C} ^\prime$ as $P_D(X)$ and $P_{D^\prime}(X)$.
Assume both clients start training their models, on their local datasets, starting from 
the same initial parameter $W$, e.g. the global model.  
If their datasets having different distributions, both clients will obtain two different models after local training, which have different parameter distributions. We denote the two parameter distributions as $P_{\mathcal{C}}(W)$ and $P_{\mathcal{C}'}(W)$. For simplicity, we assume the model training is a stochastic process estimating the following posterior distribution according to the Bayes' rule,
\begin{align*}
    P(W|X)\propto P(X|W)P_0(W),
\end{align*}
where $P_0(W)$ is the prior distribution of $W$. Since each client trains on the same model architecture, the likelihood model $P(W|X)$ will be the same for all clients. It is also reasonable to use the same prior distribution for every client. 
\begin{assumption}\label{assump:asump1}
The total variation distance (TV) between the distributions of any two different augmented client datasets are less than $\delta$: $TV(P_D(X),P_{D^\prime}(X))\leq \delta$.\\
\end{assumption}

To verify the assumption~\ref{assump:asump1}, we denote the distribution of generated data as $G$, and the $i$-th client’s dataset is the union of the generated data and the raw data, and the distribution of this combined dataset is denoted as $P_i$. According to the definition of TV distance and its triangle inequality, given an arbitrary $\delta$, we can always generate large enough samples such that $TV(G,P_i)$ is smaller than $\delta/2$. Thus for any two clients,  we have $TV(P_j,P_i)\leq TV(P_j,G)+TV(P_i,G)\leq \delta/2+ \delta/2=\delta$. As a result, the assumption~\ref{assump:asump1} is reasonable.
With the above assumption, we use the data processing inequality stated in Lemma~\ref{lemma:lema1} to derive the TV distance between $P_{\mathcal{C}}(W)$ and $P_{\mathcal{C}^\prime}(W)$.

\begin{lemma}\label{lemma:lema1}
(Theorem 6.2 in \cite{ajjanagadde2017lecture}) {Consider a channel that produces \(Y\) given \(X\) based on the law \(P_{Y \mid X}\) (illustrated in Figure \ref{fig:DPI}). If \(P_{Y}\) is the distribution of \(Y\) when \(X\) is generated by \(P_{X}\) and \(Q_{Y}\) is the distribution of \(Y\) when \(X\) is generated by \(Q_{X},\) then for any \(f\)-divergence \(D_{f}(\cdot \| \cdot),\)
\begin{align*}
    D_{f}\left(P_{Y} \| Q_{Y}\right) \leq D_{f}\left(P_{X} \| Q_{X}\right)
\end{align*}
}
\end{lemma}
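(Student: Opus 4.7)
The plan is to prove this data processing inequality by reducing it to a one-line application of Jensen's inequality on the convex function $f$. First I would rewrite
\[
D_f(P_Y \| Q_Y) \;=\; \int Q_Y(y)\, f\!\left(\frac{P_Y(y)}{Q_Y(y)}\right) dy,
\]
and expand the output-side likelihood ratio using the channel law $P_Y(y) = \int P_{Y\mid X}(y\mid x)\, P_X(x)\, dx$ (and the analogous formula for $Q_Y$). The crucial observation is that $P_Y(y)/Q_Y(y)$ can be expressed as a conditional expectation of the input-side likelihood ratio $P_X(x)/Q_X(x)$ under the ``posterior'' probability kernel
\[
\mu_y(dx) \;:=\; \frac{P_{Y\mid X}(y\mid x)\, Q_X(x)}{Q_Y(y)}\, dx,
\]
which, for every $y$ with $Q_Y(y)>0$, is a bona fide probability measure on $\mathcal{X}$.

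Next, applying Jensen's inequality to this conditional expectation yields
\[
f\!\left(\frac{P_Y(y)}{Q_Y(y)}\right) \;\le\; \int f\!\left(\frac{P_X(x)}{Q_X(x)}\right) \mu_y(dx).
\]
I would then multiply both sides by $Q_Y(y)$ and integrate in $y$. The $Q_Y(y)$ appearing in the denominator of $\mu_y$ cancels cleanly, Fubini's theorem lets me swap the order of integration, and the identity $\int P_{Y\mid X}(y\mid x)\, dy = 1$ collapses the resulting double integral to $\int Q_X(x)\, f(P_X(x)/Q_X(x))\, dx$, which is exactly $D_f(P_X \| Q_X)$. Chaining these steps gives the desired inequality.

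The main obstacle is not conceptual but measure-theoretic bookkeeping: the argument has to be carried out with Radon--Nikodym derivatives taken with respect to a common dominating measure (such as $P_X + Q_X$ on the input side and the induced measure on the output side), and care is needed at points where $Q_Y(y)$ or $Q_X(x)$ vanishes. There one must invoke the standard conventions $0\cdot f(0/0) = 0$ and $a\, f(a/0) = a\, f'(\infty)$ (with $f'(\infty) := \lim_{t\to\infty} f(t)/t$) so that the inequality still holds and remains tight in the degenerate cases. Once this bookkeeping is in place, the proof is essentially a single application of Jensen's inequality together with one use of Fubini's theorem, and it transfers without change to the specific $f$-divergence (total variation) used in Assumption~\ref{assump:asump1}.
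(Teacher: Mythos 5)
Your argument is correct, and it is the standard proof of the data processing inequality for \(f\)-divergences: express \(P_Y/Q_Y\) as a conditional expectation of \(P_X/Q_X\) under the backward kernel \(\mu_y\), apply Jensen's inequality pointwise, then integrate against \(Q_Y\) and use Fubini together with \(\int P_{Y\mid X}(y\mid x)\,dy=1\). Note, however, that the paper does not prove this lemma at all --- it simply imports it as Theorem 6.2 of the cited lecture notes --- so your write-up is supplying the proof that the paper delegates to its reference, and it matches the argument given there. One minor point worth tightening: since \(f\) need not be nonnegative, the Fubini step is cleanest if you first replace \(f(t)\) by \(f(t)-f'(1^{+})(t-1)\) (which leaves the \(f\)-divergence unchanged and makes the integrand nonnegative), or simply observe that for the total-variation case actually used in the paper, \(f(t)=\tfrac12\lvert t-1\rvert\ge 0\), so Tonelli applies directly.
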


\begin{figure}[ht]
	\centering
	\includegraphics[width=0.3\textwidth]{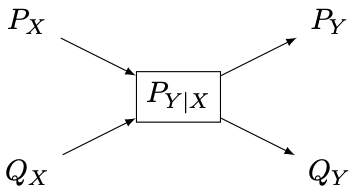}
	\caption{Data processing inequality}
	\label{fig:DPI}
\end{figure}

\begin{theorem}\label{lemma:ZDA_DF}
Federated learning with zero-shot data augmentation satisfies the differential privacy $(0,\delta)$-DP. 
\end{theorem}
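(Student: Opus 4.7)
The plan is to combine Assumption~\ref{assump:asump1} with the data processing inequality stated in Lemma~\ref{lemma:lema1}, and then convert the resulting total variation bound on the distributions of trained parameters into the $(0,\delta)$-DP guarantee of Definition~\ref{def: def1}. The key observation is that, for the two adjacent client sets $\mathcal{C}$ and $\mathcal{C}^\prime$ defined in~(\ref{eq:clients}), the only thing that changes between the two runs of the mechanism is the induced data distribution ($P_D$ vs.\ $P_{D^\prime}$); the map from data to trained parameters is a common stochastic channel $P_{W\mid X}$ determined by the shared architecture, the shared initialization $W$, the shared prior $P_0(W)$, and the shared likelihood $P(X\mid W)$. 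Under this view, the two output distributions $P_{\mathcal{C}}(W)$ and $P_{\mathcal{C}^\prime}(W)$ are exactly the push-forwards of $P_D(X)$ and $P_{D^\prime}(X)$ through the same channel.

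First I would set up the channel view: instantiate Lemma~\ref{lemma:lema1} with $P_X \gets P_D(X)$, $Q_X \gets P_{D^\prime}(X)$, $P_Y \gets P_{\mathcal{C}}(W)$, $Q_Y \gets P_{\mathcal{C}^\prime}(W)$, and $P_{Y\mid X} \gets P_{W\mid X}$. Since total variation is an $f$-divergence with $f(t)=\tfrac12\lvert t-1\rvert$, the inequality yields
\[
TV\bigl(P_{\mathcal{C}}(W),\, P_{\mathcal{C}^\prime}(W)\bigr) \;\leq\; TV\bigl(P_D(X),\, P_{D^\prime}(X)\bigr).
\]
Applying Assumption~\ref{assump:asump1} to the right-hand side then gives $TV\bigl(P_{\mathcal{C}}(W),\, P_{\mathcal{C}^\prime}(W)\bigr) \leq \delta$.

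Next I would invoke the variational characterization of TV: for every measurable $\mathcal{S}\subset \mathcal{R}$,
\[
P(\mathcal{M}(\mathcal{C})\in\mathcal{S}) - P(\mathcal{M}(\mathcal{C}^\prime)\in\mathcal{S}) \;\leq\; TV\bigl(P_{\mathcal{C}}(W),\, P_{\mathcal{C}^\prime}(W)\bigr) \;\leq\; \delta.
\]
Rearranging and using $e^{0}=1$ produces $P(\mathcal{M}(\mathcal{C})\in\mathcal{S}) \leq e^{0}\, P(\mathcal{M}(\mathcal{C}^\prime)\in\mathcal{S}) + \delta$, which is exactly the $(0,\delta)$-DP condition with $\epsilon=0$.

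The main obstacle I anticipate is justifying that the training procedure really factors as a shared, data-independent channel $P_{W\mid X}$, so that the DPI is applicable. Concretely, the randomness in SGD (mini-batch sampling, initialization seed, etc.) must be independent of which client is swapped, and the ZSDG step must truly produce a class-balanced augmentation distribution close in TV to a common reference $G$, so that the triangle-inequality argument sketched after Assumption~\ref{assump:asump1} gives the required $\delta$-closeness between any two augmented client datasets. Once these modeling points are granted, the theorem reduces to two short invocations: the data processing inequality to transport the TV bound from data space to parameter space, and the standard TV-to-probability-difference inequality to match the form of Definition~\ref{def: def1}.
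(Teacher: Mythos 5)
Your proof follows essentially the same route as the paper's: the data processing inequality (Lemma~\ref{lemma:lema1}, with total variation as an $f$-divergence) pushes the $\delta$ bound of Assumption~\ref{assump:asump1} from data space to the space of trained parameters, and the variational characterization of TV then yields the $(0,\delta)$-DP inequality of Definition~\ref{def: def1}. The one step you omit is that the paper's mechanism releases the server-aggregated model $W_{agg}=\tfrac{1}{n}W+\tfrac{n-1}{n}W_0$ rather than the client's locally trained parameters, so the paper applies the data processing inequality a second time to carry the TV bound through the aggregation; this is a minor addition (aggregation is post-processing, handled by the same lemma), not a conceptual difference.
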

\begin{proof}\label{proof:ZDA_DF}
Since the total variation distance is an instance of $f$-divergence \cite{ajjanagadde2017lecture}, applying Lemma~\ref{lemma:lema1}, we obtain
\begin{align*}
    TV(P_{\mathcal{C}}(W),P_{\mathcal{C}^\prime}(W))\leq TV(P_D(X),P_{D^\prime}(X))\leq \delta.
\end{align*}
In federated learning, we perform model aggregation, denoted as $W_{agg}$, as 
\begin{align*}
    W_{agg}=\frac{1}{n}W+\frac{n-1}{n}W_0
\end{align*}
where $W_0$ is the parameter aggregated on the set of other clients  $\mathcal{C}_0$ (as defined in Eq.~\ref{eq:clients}) and $n$ is the number of clients in  $\mathcal{C}$. We denote the two different distributions of $W_{agg}$ in the two models as $P_{\mathcal{C}}(W_{agg})$ and $P_{\mathcal{C}^\prime}(W_{agg})$. Similarly, we can also use the Lemma~\ref{lemma:lema1} to derive that,
\begin{align*}
     TV(P_{\mathcal{C}}(W_{agg}),P_{\mathcal{C}^\prime}(W_{agg}))\leq TV(P_{\mathcal{C}}(W),P_{\mathcal{C}^\prime}(W))\leq \delta
\end{align*}
Based on the definition of total variation distance, we have
\begin{align*}
    \mathop{sup}\limits_{S\subset R}|P_{\mathcal{C}}(W_{agg}\in S)-P_{\mathcal{C}^\prime}(W_{agg}\in S)|\leq \delta
\end{align*}
Define the stochastic mechanism $M$ as the projection from the client set to any model parameter $W_{agg}\in \mathcal{R}$. Then the distribution of $M(\mathcal{C})$ and $M(\mathcal{C^\prime})$ are the distributions of $W_{agg}$ and $W^\prime_{agg}$, respectively. Hence, for any $S\subset R$:
\begin{align*}
    P(M(\mathcal{C})\in S)\leq P(M(\mathcal{C}^\prime) \in S)+ \delta~,
\end{align*}
which finishes the proof that Fed-ZDA satisfies $(0,\delta)$-DP.
\end{proof}

\end{document}